\begin{document}

\title{On Convex Clustering Solutions}

\author{\name Canh Hao Nguyen \email canhhao@kuicr.kyoto-u.ac.jp \\
	\name Hiroshi Mamitsuka \email mami@kuicr.kyoto-u.ac.jp \\
       \addr Bioinformatics Center, ICR, Kyoto University \\
       Gokasho, Uji, Kyoto, 611-0011, Japan}
       
\editor{--}

\maketitle

\begin{abstract}
Convex clustering is an attractive clustering algorithm with favorable properties such as efficiency and optimality owing to its convex formulation. It is thought to generalize both k-means clustering and agglomerative clustering. However, it is not known whether convex clustering preserves desirable properties of these algorithms. A common expectation is that convex clustering may learn difficult cluster types such as non-convex ones. Current understanding of convex clustering  is limited to only  consistency results on well-separated clusters.  We show new understanding of its solutions. We prove that convex clustering  can only learn convex clusters. We then show that the clusters have disjoint bounding balls with significant gaps. We further characterize the solutions, regularization hyperparameters, inclusterable cases and consistency. 
\end{abstract}

\begin{keywords}
  Convex clustering, K-means clustering, agglomerative clustering, bounding balls.
\end{keywords}

\section{Introduction}
Clustering is a challenging  problem and so far has not been well understood. Due to its usual non-convex objective functions, clustering algorithms are usually either inefficient, non-optimal or unstable. Not knowing the number of cluster beforehand is a major problem of almost all methods. Convex clustering \citep{Pelckmans05,Hocking11,Lindsten11} holds much promise as it offers optimal solutions to clustering without the need to specify the number of clusters beforehand. Its convex formulation offers computational efficiency and optimality of the solutions. Convex clustering  is thought to generalize both k-means clustering \citep{macqueen67} and agglomerative clustering \citep{johnson67} due to its \emph{formulation}. However, it is not clear on how convex clustering relates to k-means and agglomerative clusterings in terms of \emph{solutions}.

In this paper, we consider the basic nonweighted version of convex clustering formulated as follows. Given a data set in matrix form $X = [x_1,x_2,\cdots x_n], X \in \mathbb{R}^{d\times n}$ with column vector $x_i \in \mathbb{R}^d$, convex clustering  finds $\bar{U} = [\bar{u}_1, \bar{u}_2 \cdots \bar{u}_n ]  \in \mathbb{R}^{d\times n}$ with column vector $\bar{u}_i \in \mathbb{R}^d$ satisfying the following equation (with variables $u_i$): 
\begin{align}\label{maineq}
\bar{U} &= \arg \min_{u_i \in \mathbb{R}^{d}}  \frac{1}{2} \sum_{i=1}^n \|x_i-u_i\|^2 + \lambda \sum_{j<i} \|u_i-u_j\|.
\end{align}
$\bar{u}_i$ is understood to be the prototype (centroid) of $x_i$ and $\bar{u}_i=\bar{u}_j$ means that $x_i$ and $x_j$ belong to the same cluster.  Squared loss $\|u_i-x_i\|^2$ resembles that of k-means clustering.  Fusion penalty  $\|u_i- u_j\|$ is a convex relaxation of $l_0$ that requires many $\bar{u}_i=\bar{u}_j$, producing a small number of clusters.  By increasing hyperparameter $\lambda$ ($\in \mathbb{R}_+$), intuitively  $\sum_{i,j} \|u_i-u_j\|$ will be smaller, encouraging more $\|u_i- u_j\| = 0$.  This allows convex clustering to produce different numbers of clusters optimally without specifying the exact number of desired clusters, in the same way as agglomerative clustering \citep{johnson67}. Fusion penalty is a sparse formulation \citep{Hastie15}, which is well studied in many other tasks  such as total variation denoising \citep{Rudin92}, fused lasso \citep{Tibshirani05}, network lasso \citep{Hallac15},  trend filtering \citep{Wang15} and sparse hypergraphs \citep{Nguyen20}.

However, understanding of convex clustering is still very limited. An easy version of the original problem is using $l_1$ fusion penalty function ($\sum_{j<i} \|u_i-u_j\|_1$), which results in separating each dimension of the space into a different problem \citep{Hocking11,Radchenko17}. Recent advances are mainly on its variations \citep{Tan15,Wang16,Shah17,Wang18}, properties of weights in its weighted variation \citep{Sun18, Chi19} and computational efficiency of its optimization algorithms \citep{Chi15,Panahi17,Yuan18}. These results do not bring any new understanding of its solutions.

In terms of properties of convex clustering's solutions, previous work showed that it is able to recover \emph{well-separated clusters} such as cube clusters \citep{Zhu14} with significant distance between the cubes, or  more general shapes \citep{Panahi17, Yuan18}. Parts of Gaussian components in a mixture containing the points lying within some fixed number of standard-deviations for each mean of Gaussian mixtures can also be recovered \citep{Jiang20}.  Well-separated clusters might be easily learnt by many algorithms. This is  just a sufficient condition, specifying some special cases with guaranteed  solutions. However, convex clustering produces clusters in any case.  As its formulation is regarded as a relaxation of k-means and agglomerative clustering, it is not clear what type of clusters convex clustering can learn in general and how similar the solutions are to those of k-means (Voronoi cells) or agglomerative clustering (potentially nonconvex clusters).

In this paper,  we prove  important properties of the solutions of convex clustering, contrasting it from other algorithms. Contrary to common expectation, we show that convex clustering  can \emph{only learn convex clusters}, unlike agglomerative clustering. We further show that the clusters can be bounded by \emph{disjoint bounding balls with radii depending on their sizes}. We can say that convex clustering produces  \emph{circular clusters}.  Importantly, there are always \emph{significant gaps} among the  bounding balls. This shows a fundamental difference of convex clustering from k-means and other partition-based clustering algorithms that fill up the space. We further show general characteristics on: 1) the samples that result in the same solutions by convex clustering, 2) intuitive guidelines on hyperparameter setting, 3) a case that is impossible to cluster and 4) a guideline to achieve statistical consistency. 

We show the proof of convex clusters in Section 2. In Section 3, we show the clusters' bounding balls, their sizes and gap. In Section 4, we further show general characteristics of convex clustering. We carried out experiments to demonstrate properties of convex clustering solutions more intuitively in Section 5. We then summarize our findings and discuss future work.

\section{Convex Clustering  Learns Convex Clusters}

 \textbf{Notations.} Let $x = [x_1^T,x_2^T,\cdots x_n^T]^T$, $u =  [u_1^T,u_2^T,\cdots u_n^T]^T$ and $\bar{u} = [\bar{u}_1^T,\bar{u}_2^T,\cdots \bar{u}_n^T]^T$ ($\in \mathbb{R}^{nd}$) be the vectors formed by stacking $n$ components  $x_i$ $u_i$ and $\bar{u_i}$ respectively. Let $f$ denote the objective function of (\ref{maineq}): 
\begin{equation}\label{objective}
f(u) =  \frac{1}{2} \sum_{i=1}^n \|u_i-x_i\|^2 + \lambda \sum_{j<i} \|u_i-u_j\|_2,
\end{equation} 
and $\bar{u} = \arg\min_{u} f(u)$. Suppose that the solution set $\{\bar{u}_i\}_{i=1}^n$  contains $k$ distinct vectors $m_l$, $l = 1\cdots k$ that every $\bar{u}_i = m_l$ for some $l = 1\cdots k$. Let $k$ subsets of the training sets $V_1,V_2\cdots V_k$ denote the clustering partitions of the data that $x_i \in V_l$ if and only if $\bar{u}_i = m_l$, with $\|V_l\| =n_l$ as its cardinality.  

\begin{theorem}
\textbf{(Cluster convexity).} All clusters discovered by convex clustering are convex in the sense that the interiors of their convex hulls are disjoint. Let $H_l$ be the convex hull of the cluster $l$ defined on its set of points $x_i \in V_l$, $\forall 0 < l \leq k \in \mathbb{N}$. For any $o \neq l$, $int(H_l)  \cap int(H_o) = \emptyset$. 
\end{theorem}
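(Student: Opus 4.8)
The plan is to reduce the disjoint-interior claim to the existence of a separating hyperplane between the two point sets generating the hulls, and to produce such a hyperplane explicitly from the optimality conditions, with normal direction $w = m_l - m_o$. Since $H_l$ and $H_o$ are convex hulls of $\{x_i : i \in V_l\}$ and $\{x_j : j \in V_o\}$, their interiors are disjoint once we exhibit a hyperplane with all of $V_l$ weakly on one side and all of $V_o$ weakly on the other; I will in fact obtain strict separation with a positive margin.

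First I would write the stationarity condition of the convex objective $f$ at the optimum $\bar u$: for each $i$, $0 \in (\bar u_i - x_i) + \lambda \sum_{j\neq i}\partial\|\bar u_i - \bar u_j\|_2$. This produces subgradients $g_{ij}$ with $g_{ij} = -g_{ji}$ and $\|g_{ij}\|\le 1$, equal to the exact gradient $(\bar u_i - \bar u_j)/\|\bar u_i - \bar u_j\|$ whenever $\bar u_i \neq \bar u_j$. For $i \in V_l$, where $\bar u_i = m_l$, rearranging gives $x_i = m_l + \lambda\sum_{j\neq i}g_{ij}$. Splitting the sum into within-cluster terms ($j\in V_l$, each of norm $\le 1$ and undetermined) and between-cluster terms ($j\in V_o$, $o\neq l$, each equal to the unit vector $(m_l-m_o)/\|m_l-m_o\|$) yields $x_i = m_l + b_l + \lambda\sum_{a\in V_l,\, a\neq i}g_{ia}$, where $b_l = \lambda\sum_{o\neq l}n_o (m_l-m_o)/\|m_l-m_o\|$ is a between-cluster force identical for every point of $V_l$.

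I would then fix $l\neq o$, set $w=m_l-m_o$, and estimate $w^T(x_i - x_j)$ for arbitrary $i\in V_l$, $j\in V_o$. Expanding gives $w^T(x_i-x_j) = \|w\|^2 + w^T(b_l - b_o) + \lambda w^T\left(\sum_{a\in V_l,\, a\neq i} g_{ia} - \sum_{b\in V_o,\, b\neq j} g_{jb}\right)$. The mutual $l$--$o$ interaction inside $b_l$ and $-b_o$ contributes exactly $\lambda(n_l + n_o)\|w\|$, while the within-cluster sums contain at most $n_l - 1$ and $n_o - 1$ unit vectors, so by the triangle and Cauchy--Schwarz inequalities they contribute at least $-\lambda(n_l + n_o - 2)\|w\|$. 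Provided the third-cluster contributions to $w^T(b_l - b_o)$ are nonnegative, these combine to $w^T(x_i - x_j) \ge \|w\|^2 + 2\lambda\|w\| > 0$, uniformly over all cross pairs. Hence $\min_{i\in V_l} w^T x_i > \max_{j\in V_o} w^T x_j$, a strict separating hyperplane, and disjoint interiors follow.

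The main obstacle is precisely the contribution of each third cluster $p\neq l,o$ to $w^T(b_l - b_o)$, since a priori its force could point adversely. I would isolate the per-cluster term $\frac{w^T(m_l - m_p)}{\|m_l-m_p\|} - \frac{w^T(m_o-m_p)}{\|m_o-m_p\|}$ and, writing $a = m_l - m_p$ and $c = m_o - m_p$ so that $a - c = w$, rewrite it as $(\|a\|+\|c\|)\left(1 - \frac{a^Tc}{\|a\|\,\|c\|}\right)$, which is nonnegative because the cosine of the angle between $a$ and $c$ is at most one. This is the crux: the geometry forces every other centroid to aid, rather than hinder, the separation, ruling out cancellation. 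With this inequality the remainder is bookkeeping, and the surviving positive margin $\|w\|^2 + 2\lambda\|w\|$ also foreshadows the quantitative gap between bounding balls established later.
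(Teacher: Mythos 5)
Your proof is correct, but it takes a genuinely different route from the paper. The paper argues by contradiction with an exchange argument: if $int(H_l)\cap int(H_o)\neq\emptyset$, pick $z$ in both interiors, maximize the linear functional $\langle m_o-m_l,\cdot\rangle$ over each hull to find vertices $x_i\in V_l$, $x_j\in V_o$ with $\langle m_o-m_l,x_i\rangle>\langle m_o-m_l,x_j\rangle$, and then swap the prototypes of $x_i$ and $x_j$; since the fusion penalty depends only on the multiset of prototypes it is unchanged, while the squared loss strictly decreases, contradicting optimality of $\bar u$. That argument is elementary (no subdifferential calculus) but delivers exactly the stated conclusion and nothing more. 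You instead work forward from the first-order stationarity condition $0=\bar u_i-x_i+\lambda\sum_{j\neq i}g_{ij}$, decompose each $x_i$ into centroid plus a common between-cluster force $b_l$ plus bounded within-cluster subgradients, and exhibit $w=m_l-m_o$ as an explicit separating direction with margin $\|w\|^2+2\lambda\|w\|$; the crux, that third clusters cannot hinder separation, is the cosine inequality $(\|a\|+\|c\|)\bigl(1-\tfrac{a^Tc}{\|a\|\,\|c\|}\bigr)\geq 0$, which is precisely the inequality the paper uses later (in the proof of the bounding-ball theorem, its equation for $\langle m_l-m_o,e_{ip}-e_{jp}\rangle$) rather than in the convexity proof. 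Your approach costs more machinery but buys a strictly stronger conclusion: the hulls themselves (not just their interiors) are disjoint, separated by a slab of width at least $\|m_l-m_o\|+2\lambda$, which quantitatively anticipates the paper's later result that the bounding balls are disjoint with gap greater than $2\lambda$. In effect you have front-loaded the Theorem 2 analysis to get Theorem 1 as a corollary, whereas the paper keeps Theorem 1 self-contained and elementary.
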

\begin{proof}
\begin{figure}
  \centering
  \includegraphics[width=0.5\linewidth]{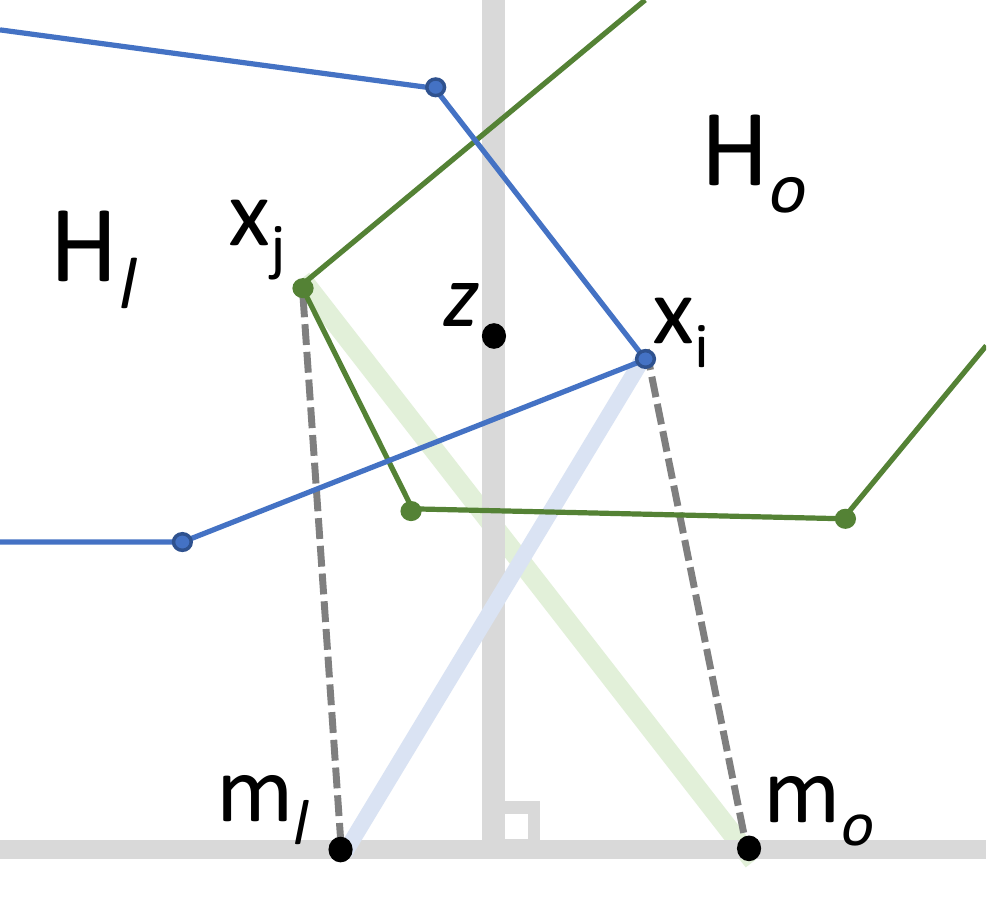}
  \caption{Idea of convexity proof. If  two convex hulls of two clusters intersect, there exist two samples ($x_i$ and $x_j$) to be swapped (dashed lines) to obtain a smaller objective function.}
  \label{fig:sub1}
\end{figure}%

We show the idea of proof here in Figure \ref{fig:sub1}. We prove by contradiction. If there exist two convex hulls that intersect, then there exist two samples of which the prototypes can be swapped to obtain a smaller objective function than the optimal one (by having the same regularization and smaller squared loss). 

Suppose that there is a non-empty intersection of two convex hulls $H_l$ and $H_o$, we can find a $z$ in the interiors of both $H_l$ and $H_o$. For any vector $a \in \mathbb{R}^d, a \neq 0$, we prove that there exists a vertex $x_i \in H_l$ that  $\langle a,x_i\rangle  > \langle a, z\rangle$. We consider a linear  transformation $g: \mathbb{R}^d \rightarrow \mathbb{R},\ , \forall v \in \mathbb{R}^d,   g(v) = \langle a,v\rangle $ and then $g(H_l) \subset \mathbb{R}$ will be a line segment as $g$ is continuous. The level curves of $g$ will be hyperplanes orthogonal to $a$, and therefore, the maximum of  $g(H_l)$ must be attained at a vertex of $H_l$, which we call $x_i$. As $z$ is in the interior of $H_l$, $g(z)$ is in the interior of $g(H_l)$, therefore 
\begin{equation}\label{theo21} 
\langle a,x_i\rangle  > \langle a, z\rangle.
\end{equation}

In a similar manner, there  exists a vertex $x_j \in H_o$ that 
\begin{equation}\label{theo22} 
\langle -a,x_j\rangle  > \langle -a,z\rangle.
\end{equation} 
Combining  (\ref{theo21}) and (\ref{theo22}), for any fixed $a$, we can have $\langle a,x_i\rangle  > \langle a,z\rangle  > \langle a,x_j\rangle $.
Consider the case that $a = m_o - m_l$, then 
\begin{align}\label{theorem23}
\langle m_o - m_l,x_i\rangle  &> \langle m_o-m_l,x_j\rangle   \nonumber \\
-\langle m_o,x_j\rangle -\langle m_l,x_i\rangle &> - \langle m_o,x_i\rangle - \langle m_l,x_j\rangle    \nonumber \\
\|x_i-m_l\|^2 + \|x_j - m_o\|^2 &> \|x_i - m_o\|^2 + \|x_j - m_l\|^2.
\end{align}

We now show that there is another input $y$ of $f$ that makes  $f(y) < f(\bar{u}$). We form $y$ from $\bar{u}$ by swapping $\bar{u}_i (= m_l)$ to $\bar{u}_j  (= m_o)$ to get a smaller squared loss while the  regularization part remains the same, i.e.  $y =  [\bar{u}_1^T, \cdots \bar{u}_{i-1}^T, \bar{u}_{j}^T, \bar{u}_{i+1}^T, \cdots \bar{u}_{j-1}^T, \bar{u}_{i}^T, \bar{u}_{j+1}^T, \cdots \bar{u}_n]^T$. 

Consider $f(y) - f(\bar{u})$. Notice that  the regularization parts, involving all pairwise distances (with different orderings), are the same for $f(y)$ and $f(\bar{u})$. Hence, the regularization parts cancel out each other in  $f(y) - f(\bar{u})$. 

The squared loss parts, except for those involving $\bar{u}_i$ and $\bar{u}_j$, are the same for all terms, also cancel out each other in $f(y) - f(\bar{x})$. Then, only the the terms involving $\bar{u}_i$ and $\bar{u}_j$ remain: 
\begin{align}
f(y) - f(\bar{u}) =  \frac{1}{2} \left( \|x_i - \bar{u}_j\|^2 + \|x_j - \bar{u}_i\|^2  - \|x_i - \bar{u}_i\|^2 - \|x_j - \bar{u}_j\|^2 \right) < 0,
\end{align}
according to (\ref{theorem23}). This is contradictory to the optimality of $\bar{u}$. That is, if any two of the convex hulls of the clusters overlap, then there will be two samples from two clusters that we can swap their prototypes to arrive at a solution that has even smaller loss function than the optimal one. Hence, it is concluded that interiors of the convex hulls of clusters must be disjoint, i.e., the clusters are convex.
\end{proof}

This shows the key difference to agglomerative clustering, which can produce nonconvex clusters.  


\section{Cluster Bounding Balls}
In this section, we show the main result that the clusters learnt by convex clustering can be bounded by disjoint bounding balls with significant gaps (Theorem \ref{theoboundingball}) and other properties.

\textbf{Notation.} Let $\epsilon \in \mathbb{R}^{nd}$, $\epsilon= [\epsilon_1^T, \epsilon_2^T, \cdots \epsilon_n^T]^T, \epsilon_i \in \mathbb{R}^d, \|\epsilon\| = 1$ be any unit vector in $\mathbb{R}^{nd}$.  In this section, we will take partial derivative in the direction of $\epsilon$ to elucidate the consequences of the optimality condition of (\ref{maineq}) and arrive at desirable results. Let $e_{ij}$ denote the unit vector from $\bar{u}_j$ to $\bar{u}_i$ for $\bar{u}_i \neq \bar{u}_j$: $ e_{ij} = \frac{\bar{u}_i-\bar{u}_j}{\|\bar{u}_i-\bar{u}_j\|}$. Then $\|e_{ij}\| = 1$ and $e_{ij} = - e_{ji}$. Let $E_i = \sum_{j | \bar{u}_i \neq \bar{u}_j} e_{ij}$. Note that if $\bar{u}_i = \bar{u}_j$ then $E_i = E_j$, meaning that  $E_i$ is the same for all samples in a cluster.

\subsection{Optimality Condition}
\textbf{Observation.}  Even though $f$ is not differentiable, it is \emph{directionally differentiable} because all of its components, i.e. a squared loss and fusion penalty (pairwise distances), are. The key observation of optimality condition for (\ref{maineq}) is that, at the solution of the problem (\ref{maineq}), \emph{all of its directional derivatives are nonnegative}. We take the directional derivative of $f$ (\ref{objective}) at $\bar{u}$ in the direction of $\epsilon, \forall \epsilon \in \mathbb{R}^{nd}, \|\epsilon\| = 1$, for $0 < \gamma \in \mathbb{R}$, $u = \bar{u} + \gamma \epsilon$:
\begin{equation}\label{eq:cond1}
\partial^{\epsilon}_{\bar{u}} (f) = \lim_{\gamma \rightarrow 0} \frac{f(u)-f(\bar{u})}{\gamma } \geq 0.
\end{equation}

We first derive the general formula for $\partial^{\epsilon}_{\bar{u}} (f)$ before considering its special cases of interest.
\begin{lemma}\label{lemma1}
General formula for directional derivative. 
\begin{align}\label{eqlemma1}
\partial^{\epsilon}_{\bar{u}} (f) = \sum_i \langle \bar{u}_i - x_i + \lambda E_i, \epsilon_i \rangle + \lambda \sum_{j < i | \bar{u}_i = \bar{u}_j} \|\epsilon_i-\epsilon_j\|.
\end{align}
\end{lemma}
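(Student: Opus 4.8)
The plan is to exploit the linearity of the directional-derivative operator over finite sums. Since $f$ decomposes into the smooth squared-loss term plus finitely many pairwise-norm terms, and each summand is directionally differentiable at $\bar{u}$ (as already noted), the directional derivative of $f$ equals the sum of the directional derivatives of its summands. I would therefore handle the three kinds of summands separately and recombine. The interchange of limit and finite sum in (\ref{eq:cond1}) is routine precisely because there are finitely many terms.

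First I would treat the squared-loss part $\frac{1}{2}\sum_i \|u_i - x_i\|^2$, which is everywhere differentiable. Its gradient at $\bar{u}$ has $i$-th block $\bar{u}_i - x_i$, so differentiating $\frac{1}{2}\|\bar{u}_i + \gamma \epsilon_i - x_i\|^2$ at $\gamma = 0$ gives the contribution $\sum_i \langle \bar{u}_i - x_i, \epsilon_i\rangle$.

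Next I would split the fusion penalty $\lambda \sum_{j<i}\|u_i - u_j\|$ according to whether $\bar{u}_i = \bar{u}_j$. For a pair with $\bar{u}_i \neq \bar{u}_j$ the argument is nonzero at $\bar{u}$, so $\|u_i - u_j\|$ is differentiable there with gradient $e_{ij}$ in $u_i$ and $-e_{ij} = e_{ji}$ in $u_j$; its directional derivative is thus $\langle e_{ij}, \epsilon_i - \epsilon_j\rangle = \langle e_{ij}, \epsilon_i\rangle + \langle e_{ji}, \epsilon_j\rangle$. Summing over all such unordered pairs and collecting the coefficient of each $\epsilon_i$ turns these terms into $\sum_i \langle E_i, \epsilon_i\rangle$, using the antisymmetry $e_{ij} = -e_{ji}$ and the definition $E_i = \sum_{j \mid \bar{u}_i \neq \bar{u}_j} e_{ij}$. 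For a pair with $\bar{u}_i = \bar{u}_j$ the argument vanishes at $\bar{u}$, so the summand is the norm evaluated at the origin; here I would use that the one-sided directional derivative of $\|\cdot\|$ at $0$ in direction $v$ is $\|v\|$, since $\lim_{\gamma \to 0^+}\gamma^{-1}\|\gamma v\| = \|v\|$. With $v = \epsilon_i - \epsilon_j$ this contributes $\lambda \sum_{j<i\mid \bar{u}_i = \bar{u}_j}\|\epsilon_i - \epsilon_j\|$. Adding the three contributions yields exactly (\ref{eqlemma1}).

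The only genuinely delicate point is the bookkeeping that folds the pairwise $e_{ij}$ terms into the per-sample vectors $E_i$: one must track the ordered-versus-unordered indexing and apply $e_{ij} = -e_{ji}$ consistently so that the coefficient of $\epsilon_i$ aggregates to $E_i$ without double counting. The non-smoothness appears only in the $\bar{u}_i = \bar{u}_j$ terms, and there the one-sided limit of the norm is well defined (and equals $\|\epsilon_i - \epsilon_j\|$ rather than being undefined) precisely because the definition in (\ref{eq:cond1}) restricts to $\gamma > 0$.
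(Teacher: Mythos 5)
Your proposal is correct and follows essentially the same route as the paper: decompose $f$ into the squared loss plus pairwise fusion terms, split the latter according to whether $\bar{u}_i = \bar{u}_j$, and fold the between-cluster contributions $\langle e_{ij}, \epsilon_i\rangle + \langle e_{ji}, \epsilon_j\rangle$ into $\sum_i \langle E_i, \epsilon_i\rangle$. The only difference is cosmetic: where you invoke the standard facts that $\|\cdot\|$ has gradient $e_{ij}$ away from the origin and one-sided directional derivative $\|v\|$ at the origin, the paper derives these limits explicitly (using the identity $\|a\| - \|b\| = (\|a\|^2 - \|b\|^2)/(\|a\| + \|b\|)$ for the nonzero case), which yields the same result.
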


\begin{proof}

Let's unfold the  directional derivative. 
\begin{align}
& f(u)-f(\bar{u}) \nonumber \\
=  &\frac{1}{2}  \sum_{i = 1}^n  (x_i-\bar{u}_i-\gamma \epsilon_i)^2  + \lambda \sum_{j<i}  \|\bar{u}_i + \gamma \epsilon_i -\bar{u}_j - \gamma \epsilon_j\| - \frac{1}{2}  \sum_{i = 1}^n (x_i-\bar{u}_i)^2   -  \lambda \sum_{j < i}  \|\bar{u}_i-\bar{u}_j\|  \nonumber \\
= &\sum_{i = 1}^n(\langle\bar{u}_i-x_i,\gamma \epsilon_i\rangle  + \frac{1}{2}\gamma^2 \|\epsilon_i\|^2) + \lambda \sum_{j < i}  ( \|\bar{u}_i + \gamma \epsilon_i -\bar{u}_j - \gamma \epsilon_j\|  - \|\bar{u}_i-\bar{u}_j\|).
\end{align}

Denote $f_1(u) \stackrel{\text{def}}{=} \sum_{i = 1}^n(\langle\bar{u}_i-x_i,\gamma \epsilon_i\rangle  + \frac{1}{2}\gamma^2\|\epsilon_i\|^2)$ for the squared loss part, and $f_2(u) \stackrel{\text{def}}{=} \lambda \sum_{j < i}  ( \|\bar{u}_i + \gamma \epsilon_i -\bar{u}_j - \gamma \epsilon_j\|  - \|\bar{u}_i-\bar{u}_j\|)$ for the regularization part.  $f(u)-f(\bar{u}) = f_1(u) + f_2(u)$.  Then,

\begin{align}\label{1part1}
\lim_{\gamma \rightarrow 0} \frac{f_1(u)}{\gamma } = \sum_{i = 1}^n(\langle\bar{u}_i-x_i, \epsilon_i\rangle.
\end{align}

To compute $\lim_{\gamma \rightarrow 0} \frac{f_2(u)}{\gamma } $, consider each component of the sum. There are two cases. 

Case 1, within-cluster fusion penalty: $\bar{u}_i = \bar{u}_j$, then 
\begin{equation}\label{1case1}
\lim_{\gamma \rightarrow 0} \frac{\|\bar{u}_i + \gamma \epsilon_i -\bar{u}_j - \gamma \epsilon_j\|  - \|\bar{u}_i-\bar{u}_j\|}{\gamma }  =  \|\epsilon_i-\epsilon_j\|.
\end{equation}

Case 2, between-cluster fusion penalty: $\bar{u}_i \neq \bar{u}_j$, first 
\begin{align}
& \|\bar{u}_i +\gamma\epsilon_i - \bar{u}_j - \gamma\epsilon_j\| -   \|\bar{u}_i-\bar{u}_j\| \nonumber \\
= &  \frac{\|\bar{u}_i +\gamma\epsilon_i - \bar{u}_j - \gamma\epsilon_j\|^2 - \|\bar{u}_i-\bar{u}_j\|^2 }{\|\bar{u}_i +\gamma\epsilon_i - \bar{u}_j - \gamma\epsilon_j\| + \|\bar{u}_i-\bar{u}_j\|} \nonumber \\
=  & \frac{2\langle\bar{u}_i-\bar{u}_j,\gamma (\epsilon_i -\epsilon_j)\rangle  + \gamma^2\|\epsilon_i-\epsilon_j\|^2}{\|\bar{u}_i + \gamma \epsilon_i - \bar{u}_j\| +   \|\bar{u}_i-\bar{u}_j\|}.
\end{align}
Then,
\begin{align}\label{1case2}
& \lim_{\gamma \rightarrow 0} \frac{  \|\bar{u}_i +\gamma\epsilon_i - \bar{u}_j - \gamma\epsilon_j\| -   \|\bar{u}_i-\bar{u}_j\|}{\gamma } \nonumber\\
= &\lim_{\gamma \rightarrow 0} \frac{2\langle\bar{u}_i-\bar{u}_j,\gamma (\epsilon_i -\epsilon_j)\rangle  }{2\gamma \|\bar{u}_i-\bar{u}_j\|} + \frac{\gamma \|\epsilon_i-\epsilon_j\|^2}{2\|\bar{u}_i-\bar{u}_j\|}  \nonumber\\
= &\langle e_{ij}, (\epsilon_i-\epsilon_j)\rangle \nonumber \\
= &\langle e_{ij}, \epsilon_i\rangle  + \langle e_{ji}, \epsilon_j\rangle.
\end{align}

Then from (\ref{1case1}) and (\ref{1case2}),
\begin{align}\label{1part2}
\lim_{\gamma \rightarrow 0} \frac{f_2(u)}{\gamma }  = &\lambda \sum_{j < i | \bar{u}_i = \bar{u}_j} \|\epsilon_i-\epsilon_j\|  +  \lambda \sum_{j < i | \bar{u}_i \neq \bar{u}_j} \langle e_{ij}, \epsilon_i\rangle  + \langle e_{ji}, \epsilon_j\rangle  \nonumber \\
 = & \lambda \sum_{i, j | \bar{u}_i \neq \bar{u}_j} \langle e_{ij}, \epsilon_i\rangle  \nonumber \\
 = & \lambda \sum_{i} \langle E_i, \epsilon_i\rangle.
\end{align}

From (\ref{1part1}) and (\ref{1part2}), we have the directional derivative $\partial^{\epsilon}_{\bar{u}} (f) $
\begin{align}
=  &\sum_i \langle \bar{u}_i - x_i, \epsilon_i\rangle  + \lambda \sum_{j < i | \bar{u}_i = \bar{u}_j} \|\epsilon_i-\epsilon_j\|  +  \lambda \sum_{i} \langle E_i, \epsilon_i\rangle   \nonumber \\
= &\sum_i \langle \bar{u}_i - x_i +\lambda E_i,  \epsilon_i\rangle  + \lambda \sum_{j < i | \bar{u}_i = \bar{u}_j} \|\epsilon_i-\epsilon_j\| .
\end{align}
\end{proof}
This is  the directional derivative in the direction of $\epsilon$, which should be nonnegative $\forall \epsilon$. Using this lemma, we consider the directional derivatives at some specific $\epsilon$ of interest to arrive at the following main result. 

\subsection{Bounding Balls}
The main result is summarized in the next theorem, followed by supplementary result on the tightness of the ball.
\begin{theorem}\label{theoboundingball}
\textbf{(Cluster bounding balls.)} There exist $k$ bounding balls $C_l \subset R^d, C_l = \{v \in \mathbb{R}^d \| \|v-c_l \| \leq r_l\}$, $l = 1\cdots k$ corresponding to $k$ clusters $V_l$,  centered at $c_l \in R^d$  with corresponding radii $r_l \in R$ that   
\begin{enumerate}
\item Each bounding ball covers its corresponding cluster in the sense that  $\forall x_i \in V_l$ then $x_i \in C_l$ ($\|u_i-c_l \| \leq r_l$). Its center and radius are: $c_l = \lambda E_i + m_l$ and $r_l = \lambda(n_l - 1)$ respectively.
\item The ball centers are the means of corresponding clusters: $c_l = \frac{\sum_{\bar{u}_i = m_l} x_i}{n_l}$.    
\item The balls are disjoint, separated by at least $2\lambda$. 
\end{enumerate}
\end{theorem}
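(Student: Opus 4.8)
The plan is to derive all three items from the first-order optimality condition of Lemma~\ref{lemma1}: since $\partial^{\epsilon}_{\bar{u}}(f)\ge 0$ for every unit direction, and the directional derivative is positively homogeneous, in fact $\partial^{\epsilon}_{\bar{u}}(f)\ge 0$ for every $\epsilon\in\mathbb{R}^{nd}$. The three claims will each come from feeding a carefully chosen family of directions into the formula $\partial^{\epsilon}_{\bar{u}}(f)=\sum_i\langle \bar{u}_i-x_i+\lambda E_i,\epsilon_i\rangle+\lambda\sum_{j<i\,|\,\bar{u}_i=\bar{u}_j}\|\epsilon_i-\epsilon_j\|$. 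Throughout I write $E_l$ for the common value of $E_i$ on $V_l$.

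For the center (items 1 and 2), I would first take a direction that translates a whole cluster rigidly: fix a unit vector $\delta\in\mathbb{R}^d$ and set $\epsilon_i=\delta$ for $i\in V_l$ and $\epsilon_i=0$ otherwise. Because all points of $V_l$ move together, every within-cluster term $\|\epsilon_i-\epsilon_j\|$ vanishes, and the derivative collapses to $\langle\, n_l m_l-\sum_{i\in V_l}x_i+n_l\lambda E_l,\ \delta\,\rangle\ge 0$. Applying the same inequality to $-\delta$ forces the bracketed vector to vanish, which yields simultaneously $c_l=m_l+\lambda E_l=\tfrac{1}{n_l}\sum_{\bar{u}_i=m_l}x_i$, establishing item 2 and the center claimed in item 1. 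For the radius, I would instead perturb a single point: fix $i\in V_l$ and a unit vector $\delta$, set $\epsilon_i=\delta$ and all other $\epsilon_j=0$. Now exactly the $n_l-1$ within-cluster pairs containing $i$ survive, each contributing $\|\delta\|=1$, so the derivative reads $\langle m_l-x_i+\lambda E_l,\delta\rangle+\lambda(n_l-1)\ge 0$. Taking $\delta$ anti-parallel to $m_l-x_i+\lambda E_l$ makes the inner product as small as $-\|x_i-c_l\|$, whence $\|x_i-c_l\|\le\lambda(n_l-1)=r_l$; every cluster sits inside its ball.

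The gap (item 3) is where the real work lies. Using item 1 I would write $c_l-c_o=(m_l-m_o)+\lambda(E_l-E_o)$, and from $E_l=\sum_{p\neq l}n_p\tfrac{m_l-m_p}{\|m_l-m_p\|}$ project everything onto the unit vector $\hat{e}_{lo}=\tfrac{m_l-m_o}{\|m_l-m_o\|}$. The ``diagonal'' pieces, coming from the $p=o$ term of $E_l$ and the $p=l$ term of $E_o$, contribute exactly $n_o$ and $n_l$ with the correct signs, giving
\[
\langle E_l-E_o,\hat{e}_{lo}\rangle = n_l+n_o+\sum_{p\neq l,o} n_p\,\langle \hat{e}_{lp}-\hat{e}_{op},\hat{e}_{lo}\rangle .
\]

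The main obstacle is the leftover cross sum, and the crux is to show each summand $\langle \hat{e}_{lp}-\hat{e}_{op},\hat{e}_{lo}\rangle$ is nonnegative. Placing $m_o$ at the origin, $m_l=(D,0,\dots,0)$ with $D=\|m_l-m_o\|>0$, and letting $s$ be the distance of $m_p$ off the line $m_l m_o$, this summand equals $h(u)+h(v)$ with $h(t)=t/\sqrt{t^2+s^2}$ and $u+v=D>0$; since $h$ is odd and strictly increasing, $h(u)+h(v)\ge 0$ whenever $u+v>0$ (if, say, $v<0$ then $u>|v|$ forces $h(u)>-h(v)$), so the cross terms only help. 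Chaining the bounds gives
\[
\|c_l-c_o\|\ \ge\ \langle c_l-c_o,\hat{e}_{lo}\rangle\ \ge\ \|m_l-m_o\|+\lambda(n_l+n_o)\ \ge\ \lambda(n_l+n_o)=r_l+r_o+2\lambda ,
\]
so the balls are disjoint and separated by at least $2\lambda$. I expect the sign analysis of the cross terms to be the only delicate step; the rest is bookkeeping on the optimality condition.
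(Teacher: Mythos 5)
Your proposal is correct, and its skeleton matches the paper's proof exactly: the same three perturbation directions (single-point perturbation for the radius, rigid translation of a cluster for the center, and the decomposition of $c_l-c_o$ with diagonal terms contributing $n_l+n_o$ and cross terms to be shown nonnegative). The one place you genuinely diverge is the crux inequality $\langle \hat{e}_{lp}-\hat{e}_{op},\hat{e}_{lo}\rangle\geq 0$: you prove it by placing coordinates along the segment $m_o m_l$ and exploiting monotonicity of the odd function $h(t)=t/\sqrt{t^2+s^2}$, whereas the paper proves the equivalent statement $\langle m_l-m_o, e_{ip}-e_{jp}\rangle\geq 0$ coordinate-free, via the algebraic identity $\langle m_l-m_o, e_{ip}-e_{jp}\rangle = \left(\|\bar{u}_i-\bar{u}_p\|+\|\bar{u}_j-\bar{u}_p\|\right)\left(1-\langle e_{ip},e_{jp}\rangle\right)$, which is nonnegative simply because $e_{ip},e_{jp}$ are unit vectors. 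The paper's identity is arguably cleaner (it handles the degenerate collinear case $s=0$ uniformly, where your $h$ is only nondecreasing rather than strictly increasing, though your conclusion still holds since only nonnegativity is needed), while your geometric picture makes it more transparent \emph{why} the cross terms help. Your final chaining also differs cosmetically: you project $c_l-c_o$ onto $\hat{e}_{lo}$ via Cauchy--Schwarz, while the paper expands $\|a+b\|^2\geq\|a\|^2+2\langle a,b\rangle$; these are equivalent since the paper's $a$ is parallel to $\hat{e}_{lo}$. Both routes deliver the same bound $\|c_l-c_o\|-(r_l+r_o)\geq \|m_l-m_o\|+2\lambda>2\lambda$.
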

The theorem shows that the clusters learnt by convex clustering is different from those of k-means (Voronoi cells). They can be bounded by disjoint balls, which  do not fill up the space. The distances from  boundaries to the means of the clusters depend on the number of samples in the clusters, which are different from k-means (having equal distances to the closest cluster centers). Importantly, there are significant gaps among the bounding balls. 

\begin{proof}

\textbf{Part 1.} We choose $\epsilon = (0^T,0^T \cdots 0^T, \epsilon_i^T, 0^T \cdots 0^T)^T$ to take directional derivative in this case.  That is $\epsilon$ is only nonzero at its $i$ component, $\|\epsilon_i\| = 1$ and $\epsilon_j = 0 \forall j \neq i$. Let $x_i \in V_l$ (in the $l$ cluster).  Let $u = \bar{u} + \epsilon$. From lemma \ref{lemma1}, we compute its directional derivative:
\begin{align}
\partial^{\epsilon}_{\bar{u}} (f)  
&= \langle\bar{u}_i -x_i + \lambda E_i, \epsilon_i \rangle + \lambda \sum_{j | \bar{u}_j = \bar{u}_i} {\|\epsilon_i - \epsilon_j\|} \nonumber \\
&= \langle\bar{u}_i -x_i + \lambda E_i, \epsilon_i \rangle + \lambda (n_l - 1).  
\end{align}
From (\ref{eq:cond1}), choosing $\epsilon_i$ in the direction of $ -(\bar{u}_i -x_i +\lambda E_i) $ gives 
\begin{align}
 & -\|\bar{u}_i -x_i +\lambda E_i \| + \lambda (n_l - 1) \geq 0 , \nonumber \\
 &\lambda (n_l - 1) \geq \| (\bar{u}_i  +\lambda E_i) -x_i \|. 
\end{align}
This shows that $x_i$ lies within a distance $\lambda (n_l - 1)$ from $\bar{u}_i  +\lambda E_i $ ($= \lambda E_i + m_l$), e.g.  $x_i$ is contained in $C_l$, the ball with radius and center:
\begin{align}
r_l &= \lambda (n_l-1), \label{radius} \\
c_l &=  \lambda E_i + m_l.  \label{center}
\end{align}
 Note that $E_i$, due to the way it is defined,  is the same for all samples in the cluster, meaning that $c_l$ is common $\forall x_i \in V_l$.

\textbf{Part 2.} Consider a fixed cluster $l$, let $\epsilon_i = \epsilon' \forall  \bar{x_i} = m_l$ with $\|\epsilon'\| = \frac{1}{\sqrt{n_l}}$, $\epsilon_j = 0 \forall  \bar{u}_j \neq m_l$  (to make $\epsilon$ a unit vector). That is, $\epsilon$ is only nonzero at the components corresponding to cluster $l$, and all these components are equal to each other ($\epsilon'$). From lemma \ref{lemma1}, we compute its directional derivative:
\begin{align}
\partial^{\epsilon}_{\bar{u}} (f) &= \sum_i \langle  \bar{u}_i - x_i + \lambda E_i, \epsilon_i \rangle  + \underbrace{\lambda \sum_{j < i | \bar{u}_i = \bar{u}_j} \|\epsilon_i-\epsilon_j\|}_{=0} \nonumber \\
& =  \sum_{i| \bar{u}_i = m_l}  \langle \bar{u}_i - x_i + \lambda E_i, \epsilon' \rangle 
\end{align}

From (\ref{eq:cond1}), we have $ \sum_{i| \bar{u}_i = m_l} ( \bar{u}_i - x_i + \lambda E_i) = 0$ (by choosing $\epsilon'$ in the direction of $-\sum_{i |\bar{u}_i = m_l} ( \bar{u}_i - x_i + \lambda E_i)$ resulting in $-\|\sum_{i |\bar{u}_i = m_l} ( \bar{u}_i - x_i + \lambda E_i)\| \geq 0$, implying $\|\sum_{i |\bar{u}_i = m_l} ( \bar{u}_i - x_i + \lambda E_i)\| = 0$). 
Therefore,
\begin{align}\label{centermean}
 \sum_{i | \bar{u}_i = m_l}   x_i  &= \sum_{i |\bar{u}_i = m_l}  ( \bar{u}_i  + \lambda E_i) \nonumber \\
 \frac{ \sum_{i | \bar{u}_i = m_l}  x_i }{n_l} &=  \lambda E_i + m_l = c_l
\end{align}
from (\ref{center}). This shows that the center of a bounding ball is the mean of  the samples in the cluster. 
 
 \textbf{Part 3.} We show that the bounding balls are disjoint. Consider any two clusters $V_l$ and $V_o$ with bounding balls $C_l$ and $C_o$ centered at $c_l$ and $c_o$ with radii $r_l = \lambda(n_l -1)$ and $r_o = \lambda(n_o -1)$ respectively. 
Let $u_i \in V_l$, $u_j \in V_o$.  Then, for $p \in N, 1\leq p \le n$,
\begin{align}
   &c_l - c_o \nonumber\\
= &(m_l - m_o) + \lambda(E_i - E_j) \nonumber \\
= &(m_l - m_o) + \lambda(\sum_{p | \bar{u}_p \neq m_l} e_{ip} -  \sum_{p | \bar{u}_p \neq m_o} e_{jp}) \nonumber  \\ 
= &(m_l - m_o) + \lambda (\sum_{p | \bar{u}_p = m_o} e_{ip}  - \sum_{p | \bar{u}_p = m_l} e_{jp}  + \sum_{p |  \bar{u}_p \neq m_l \&  \bar{u}_p \neq m_o} (e_{ip} - e_{jp})) \nonumber \\
= &(m_l - m_o) + \lambda  (n_o  e_{ij} - n_l e_{ji}) +  \lambda \sum_{p |  \bar{u}_p \neq m_l \&  \bar{u}_p \neq m_o} (e_{ip} - e_{jp}) \nonumber \\
= &(m_l - m_o) + \lambda (n_l + n_o) \frac{m_l - m_o}{\|m_l - m_o\|} + \lambda\sum_{p |  \bar{u}_p \neq m_l \&  \bar{u}_p \neq m_o} (e_{ip} - e_{jp}).
\end{align}
 We now prove that  
 \begin{equation}\label{theo2part3}
 \|c_l - c_o\| \geq \|(m_l - m_o) + \lambda (n_l + n_o) \frac{m_l - m_o}{\|m_l - m_o\|}\| = \|m_l - m_o\|+\lambda (n_l + n_o) > \lambda (n_l + n_o).
 \end{equation}
 First, we prove that  $\langle m_l - m_o, e_{ip} - e_{jp}\rangle \geq 0\ \forall p | \bar{u}_p \neq m_l \&  \bar{u}_p \neq m_o$. 
 \begin{align}\label{eq:part3supp1}
&\langle m_l - m_o, e_{ip} - e_{jp} \rangle \nonumber \\
= &\langle \bar{u}_i - \bar{u}_j, e_{ip} - e_{jp} \rangle \nonumber \\
= &\langle (\bar{u}_i - \bar{u}_p )  - (\bar{u}_j - \bar{u}_p ), e_{ip} - e_{jp} \rangle \rangle \nonumber \\
= &\langle \bar{u}_i - \bar{u}_p, e_{ip} \rangle + \langle \bar{u}_j - \bar{u}_p, e_{jp} \rangle - \langle \bar{u}_i - \bar{u}_p, e_{jp} \rangle  - \langle \bar{u}_j - \bar{u}_p, e_{ij} \rangle \nonumber \\
= &\|\bar{u}_i - \bar{u}_p\| + \|\bar{u}_j - \bar{u}_p\| - \|\bar{u}_i - \bar{u}_p\| \langle e_{ip}, e_{jp} \rangle   - \|\bar{u}_j - \bar{u}_p\|  \langle e_{jp}, e_{ip} \rangle  \nonumber \\
=  &\left( \|\bar{u}_i - \bar{u}_p\| + \|\bar{u}_j - \bar{u}_p\| \right) \left(1 - \langle e_{ip}, e_{jp} \rangle\right)   \nonumber \\
\geq &0
 \end{align}
as $\langle e_{ip}, e_{jp} \rangle \leq 1$ (because $e_{ip}$ and $e_{jp}$ are unit vectors). 
 To simplify, let  $a = (m_l - m_o) + \lambda (n_l + n_o) \frac{m_l - m_o}{\|m_l - m_o\|} $ and $b_p =  e_{ip} - e_{jp}$ and $b = \sum_{p |  \bar{u}_p \neq m_l \&  \bar{u}_p \neq m_o} b_p$. Then using (\ref{eq:part3supp1}),  
 $\langle a, b_p\rangle \geq 0$, and 
 \begin{align}
 \|a + b\|^2 &= \|a\|^2 + \|b\|^2 + 2\langle a, b\rangle   \nonumber\\
 & \geq  \|a\|^2 + 2 \sum_{p |  \bar{u}_p \neq m_l \&  \bar{u}_p \neq m_o} \langle a, b_p \rangle \nonumber\\
 & \geq   \|a\|^2.
 \end{align}
Then we arrive at the first inequality of (\ref{theo2part3}), therefore, the rest follows that  $\|c_l - c_o\| > \lambda (n_l + n_o)$. With ball radii as in (\ref{radius}), 
\begin{equation}\label{eq:gap}
\|c_l - c_o\| - (r_l + r_o) >  \lambda (n_l + n_o) - (r_l + r_o) >2\lambda. 
\end{equation}
The distance between the ball centers are longer than total radii, meaning that the two balls are disjoint, apart by a distance of more than $2\lambda$ (\ref{eq:gap}).   
\end{proof}

\begin{theorem}
The bounding balls of the clusters are tight in general,  in the sense that it is possible to have an example  ($x_i \in V_l$) that stays right on the boundary of the ball: $\|x_i - c_l \| = r_l$ for any cluster. 
\end{theorem}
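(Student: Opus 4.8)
The plan is to prove tightness by exhibiting an explicit configuration for which the covering bound of Part 1 of Theorem \ref{theoboundingball} is attained with equality. I would work in the single-cluster regime, since there the geometry is cleanest: if all $n$ samples fuse into one cluster, then every between-cluster term vanishes, so $E_i = 0$, the center $c_l$ is simply the sample mean $\bar m = \frac{1}{n}\sum_i x_i$, and the radius is $r_l = \lambda(n-1)$. Tightness then reduces to placing one sample at distance exactly $\lambda(n-1)$ from the mean while still forcing all samples to fuse.

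Concretely, I would take one sample $x_1 = p$ and let the remaining samples coincide, $x_2 = \cdots = x_n = q$, with $\|p-q\| = n\lambda$ (for $n = 2$ this is just two points $2\lambda$ apart; the coincident points may be perturbed into distinct nearby ones without changing the conclusion). A direct computation gives $x_1 - \bar m = (n-1)\lambda\, e$ and $x_i - \bar m = -\lambda\, e$ for $i \ge 2$, where $e = (p-q)/\|p-q\|$, so $x_1$ sits at distance exactly $\lambda(n-1) = r_l$ from $\bar m$. What remains is to certify that the all-fused vector $\bar u_i = \bar m$ is genuinely the optimum.

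To certify optimality I would stay inside the directional-derivative framework of Lemma \ref{lemma1}: since $f$ is convex with strictly convex squared-loss part, $\bar u$ is the unique minimizer iff $\partial^{\epsilon}_{\bar u}(f) \ge 0$ for every direction $\epsilon$. For the fully fused candidate all pairs are within-cluster and $E_i = 0$, so Lemma \ref{lemma1} reduces the derivative to $\sum_i \langle \bar m - x_i, \epsilon_i\rangle + \lambda \sum_{j<i} \|\epsilon_i - \epsilon_j\|$. Writing $a_i = \langle e, \epsilon_i\rangle$, retaining only the $n-1$ penalty pairs that involve index $1$ (via $\|\epsilon_1 - \epsilon_i\| \ge |a_1 - a_i|$ and discarding the remaining nonnegative pairs), the derivative is bounded below by $\lambda \sum_{i \ge 2}\big[(a_i - a_1) + |a_i - a_1|\big] \ge 0$, using $t + |t| \ge 0$. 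This proves optimality. Taking the extremal direction $\epsilon_1 = e$, $\epsilon_{i \ge 2} = 0$ makes every summand vanish, so the inequality is an equality there, which is exactly the statement that the Part-1 bound is active and $x_1$ lies on the sphere $\|x_1 - c_l\| = r_l$.

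The main obstacle is the optimality certification rather than the distance bookkeeping: the configuration must sit precisely at the fusion threshold, where the penalty contributions saturate (the term $\|\epsilon_1 - \epsilon_i\|$ pays its full weight in the binding direction). Any larger separation $\|p-q\| > n\lambda$ makes the lower bound negative in the direction $\epsilon_1 = e$, so the samples no longer all fuse, while any smaller separation pulls $x_1$ strictly inside the ball. Verifying that $\|p-q\| = n\lambda$ is exactly the borderline at which fusion still holds — equivalently that the displayed lower bound is nonnegative yet attained — is the delicate point, and it is precisely what the $t + |t| \ge 0$ step secures.
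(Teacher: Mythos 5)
Your proposal is correct and takes essentially the same approach as the paper: the paper's proof likewise places one sample at distance $r_l = \lambda(n_l-1)$ from the center with the remaining $n_l-1$ samples coincident at distance $\lambda$ on the opposite side (hence pairwise separation $n_l\lambda$, exactly your $\|p-q\|=n\lambda$ configuration), and certifies optimality via the directional derivative of Lemma~\ref{lemma1}, retaining only the penalty pairs involving the boundary sample and applying Cauchy--Schwarz, just as you do with the $t + |t| \geq 0$ step. The only difference is that you specialize to the all-fused ($k=1$) case, whereas the paper argues for an arbitrary cluster inside a general solution by restricting $\epsilon$ to that cluster's coordinates; since the directional derivative decomposes additively across clusters, this is an inessential distinction (and your version is, if anything, the cleaner self-contained instance).
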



\textbf{Notation.} Let $v_i = c_l -x_i$ be the vector from a sample to its cluster center, then $v_i = \bar{u}_i - x_i + \lambda E_i$.

\begin{proof} The idea of this proof is to setup one case that there is a $x_p$ on the boundary of the  bounding ball.   For a fixed sample $x_p \in V_l$, let $v_p = r_l v_0$ and for any $j \neq p, x_j \in V_l$, $ v_j  = -\lambda v_0 $  for any $v_0 \in \mathbb{R}^d, \|v_0\| = 1$, making $\sum_{i | x_i \in V_l} v_i = 0$ and $\|v_p\| = r_l$, or $x_p$ is on the boundary of $C_l$. We will show that  $\partial^{\epsilon}_{\bar{u}} (f) \geq 0$ for any $\epsilon$, i.e. the samples we take can be one of the datasets with the same solutions. We show this for cluster $l$ (without loss of generality). Recall the directional derivative (\ref{eqlemma1}) with $\epsilon_i = 0\ \forall x_i \notin V_l$:

\begin{align}
\partial^{\epsilon}_{\bar{u}} (f) = &\sum_{i | x_i \in V_l} \langle v_i , \epsilon_i \rangle + \lambda \sum_{j < i | \bar{u}_i = \bar{u}_j} \|\epsilon_i - \epsilon_j\| \nonumber \\
= &\langle v_p, \epsilon_p \rangle + \sum_{j | j\neq p, x_j \in V_l} \langle v_j , \epsilon_j \rangle +  \lambda \sum_{j < i | \bar{u}_i = \bar{u}_j} \|\epsilon_i - \epsilon_j\| \nonumber \\
= &\langle \lambda (n_l -1) v_0, \epsilon_p \rangle +  \sum_{j | j\neq p, x_j \in V_l} \langle -\lambda v_0, \epsilon_j \rangle +  \lambda \sum_{j < i | \bar{u}_i = \bar{u}_j} \|\epsilon_i - \epsilon_j\| \nonumber \\
=  &\lambda \sum_{j | j\neq p, x_j \in V_l} \langle v_0, \epsilon_p - \epsilon_j \rangle +  \lambda \sum_{j < i | \bar{u}_i = \bar{u}_j} \|\epsilon_i - \epsilon_j\| \nonumber \\
\geq &\sum_{j | j\neq p, x_j \in V_l} \langle v_0, \epsilon_p - \epsilon_j \rangle  +  \lambda \sum_{j | j\neq p, x_j \in V_l} \|\epsilon_p - \epsilon_j\| \nonumber \\
= &\sum_{j | j\neq i, x_j \in V_l}  (\langle v_0, \epsilon_p - \epsilon_j \rangle + \|\epsilon_p - \epsilon_j\|) \nonumber \\
\geq & 0\ \forall \epsilon | \epsilon_i = 0\ \forall x_i \notin V_.
 \end{align}
due to $\|v_0\| =1$, $\langle v_0, \epsilon_p - \epsilon_j \rangle \geq - \|\epsilon_p - \epsilon_j\|$. 
This makes our choice of $x_p,x_j \in V_l$ a valid sample set of the cluster that have nonnegative directional derivative, resulting in the same solutions. Hence, it is possible for any cluster to have a sample that lies in the ball's boundary, making all the balls tight (cannot be any smaller).  
\end{proof}


\section{General Characteristics}
In this section, we show general characteristics on  samples that have the same solutions by convex clustering, properties and intuitive guidelines on hyperparameter settings ,  a case of impossibility of convex clustering and a guideline on  statistical consistency.

\subsection{Datasets with the same solution.} 
Consider a fixed solution of (\ref{objective}) $\bar{u}$, we investigate   all  datasets (data vectors $x$) that take $\bar{u}$ as their solutions. By the way $E_i$, $c_l$, $m_l$, $n_l$, $r_l$ and $k$ are defined, they are independent of $x$ given $\bar{u}$ and can be computed from $\bar{u}$.  Let $v = [v_1^T, v_2^T \cdots v_n^T]^T \in \mathbb{R}^{nd}$ with $v_i = c_l-x_i$.  We wish to determine the set of $x$ (equivalently, $v$)  that results in the same solution $\bar{u}$. We show a concrete formulation as follows. Let $g(\epsilon) = \lambda \sum_{i<j | \bar{u}_i = \bar{u}_j} \|\epsilon_i - \epsilon_j\|$. Let $g^*(v)$ be the Fenchel conjugate of $g(\epsilon)$.

\begin{theorem} \textbf{(General condition)}
Suppose that $\bar{u}$ is given. The necessary and sufficient condition for $x$ to arrive at solution $\bar{u}$ is that $g^*(v) \leq 0$.
\end{theorem}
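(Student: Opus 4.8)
The plan is to recognize that, since $f$ is convex for any fixed data vector $x$, the point $\bar{u}$ is a global minimizer of $f$ if and only if every directional derivative at $\bar{u}$ is nonnegative; this single first-order condition simultaneously captures necessity and sufficiency. Thus ``$x$ arrives at solution $\bar{u}$'' is equivalent to ``$\partial^{\epsilon}_{\bar{u}}(f) \geq 0$ for all directions $\epsilon$.'' Because the directional derivative is positively homogeneous of degree one in $\epsilon$ (in Lemma \ref{lemma1} both the linear term and the $\|\epsilon_i-\epsilon_j\|$ term scale linearly under $\epsilon \mapsto t\epsilon$ for $t>0$), it suffices to impose nonnegativity over all $\epsilon \in \mathbb{R}^{nd}$ rather than only unit vectors, which is exactly the regime in which Fenchel conjugation is natural.

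Next I would rewrite the optimality condition using Lemma \ref{lemma1} together with the substitution $v_i = \bar{u}_i - x_i + \lambda E_i = c_l - x_i$. This yields
\[
\partial^{\epsilon}_{\bar{u}}(f) = \langle v, \epsilon \rangle + g(\epsilon),
\]
where $g(\epsilon) = \lambda \sum_{i<j\,|\,\bar{u}_i=\bar{u}_j}\|\epsilon_i-\epsilon_j\|$ depends only on the fixed solution $\bar{u}$ through the cluster membership, while all dependence on the data $x$ is isolated in the linear term $\langle v, \epsilon \rangle$. The optimality condition therefore reads $\langle v,\epsilon\rangle + g(\epsilon) \geq 0$ for every $\epsilon$.

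I would then convert this into conjugate form. Rearranging gives $\langle -v,\epsilon\rangle - g(\epsilon) \leq 0$ for all $\epsilon$; taking the supremum over $\epsilon$ and using the definition
\[
g^{*}(w) = \sup_{\epsilon}\left[\langle w, \epsilon \rangle - g(\epsilon)\right]
\]
turns this into $g^{*}(-v) \leq 0$. The final step exploits the evenness of $g$: since each summand $\|\epsilon_i-\epsilon_j\|$ is unchanged under $\epsilon \mapsto -\epsilon$, we have $g(-\epsilon)=g(\epsilon)$, and the change of variable $\epsilon \mapsto -\epsilon$ inside the supremum gives $g^{*}(-v)=g^{*}(v)$. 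Hence the optimality condition is exactly $g^{*}(v)\leq 0$, as claimed.

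The main obstacle is careful handling of two convex-analytic subtleties rather than any heavy computation. I must justify that nonnegativity of \emph{all} directional derivatives is genuinely sufficient, not merely necessary, for a global minimum, which rests squarely on convexity of $f$; and I must track sign conventions, since the direct manipulation produces $g^{*}(-v)\leq 0$ and only the evenness of $g$ recovers the stated $g^{*}(v)\leq 0$. It is also worth remarking that $g^{*}(v)\geq 0$ always holds, because the supremum includes $\epsilon=0$ with value $0$; consequently the condition $g^{*}(v)\leq 0$ is in fact the equality $g^{*}(v)=0$. This is a useful consistency check rather than a difficulty, and it confirms that the admissible data vectors $x$ form the convex level set $\{x : g^{*}(c_l-x)\leq 0\}$.
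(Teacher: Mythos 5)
Your proposal is correct and follows essentially the same route as the paper: invoke the nonnegativity of all directional derivatives (Lemma \ref{lemma1}), isolate the data dependence as $\langle v,\epsilon\rangle + g(\epsilon) \geq 0$ for all $\epsilon$, and pass to the Fenchel conjugate. You are in fact more careful than the paper on two points it leaves implicit — justifying sufficiency via convexity of $f$, and using the evenness of $g$ to turn $g^*(-v)\leq 0$ into $g^*(v)\leq 0$ — and your observation that the condition is really the equality $g^*(v)=0$ is a valid bonus.
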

As $g$ is a convex function, we know that $g^*$ is also convex and the set of $v$ resulting in the same solutions is also convex. 
\begin{proof}
The necessary and sufficient condition for $v$ to have the same solutions  is that  $\partial^{\epsilon}_{\bar{u}} (f) \geq 0$ (noting that $\|\epsilon\|$ is not necessary to be of length $1$ for the optimality condition), or
\begin{align}
\sum_i \langle v_i, \epsilon_i \rangle + \lambda \sum_{i<j| \bar{u}_i = \bar{u}_j} \|\epsilon_i-\epsilon_j\| &\geq 0 \ \forall \epsilon \nonumber \\
\langle v,\epsilon\rangle + g(\epsilon) &\geq 0\ \forall \epsilon  \nonumber \\ 
\sup_{-\epsilon}(\langle v,-\epsilon\rangle - g(-\epsilon)) &\leq 0  \nonumber \\
 g^*(v) &\leq 0.
\end{align}
\end{proof}
The merit of this theorem is that, given a dataset with its solution $\bar{u}$, we can check and characterize all other datasets having the same solution.

\subsection{What is the range of $\lambda$ for nontrivial solutions?} 
We define nontrivial solutions in the sense that $1 < k < n$. Setting $\lambda$ is not a trivial task, previous work \citep{Panahi17} has pointed out a range of $\lambda$, but not easy to compute. We show  intuitive bounds that can be easily computed from the dataset using the main result. 
\begin{theorem} Following are the necessary conditions for $\lambda$ to obtain nontrivial solutions. (1) Upper bound: for $1<k$, it is necessary that  
\begin{equation}\label{lambdabound1}
\lambda <  \max_{i\neq j} \frac{\|x_i -x_j\|}{ 2}.
\end{equation}
(2) Lower bound: for $k<n$, let $q$ be the size of the largest cluster, it is necessary that 
\begin{equation}\label{lambdabound2}
\min_{i \neq j} \frac{\|x_i - x_j\|}{\sqrt{2 q(q-1)}} \leq  \lambda.
\end{equation}

\end{theorem}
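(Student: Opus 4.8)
The plan is to obtain both bounds as direct corollaries of the bounding-ball theorem (Theorem~\ref{theoboundingball}), read in the contrapositive: a $\lambda$ that is too large cannot support two distinct clusters, and a $\lambda$ that is too small cannot hold any cluster of more than one point together. Thus each inequality is a necessary condition extracted from the containment, center, and separation statements already proved.

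For the upper bound (Part~1), I would assume $k>1$, so there are two clusters $V_l$ and $V_o$. Picking any $x_i \in V_l$ and $x_j \in V_o$, I chain the triangle inequality with three facts from Theorem~\ref{theoboundingball}: the center separation $\|c_l - c_o\| > \lambda(n_l+n_o)$ established in (\ref{theo2part3}), together with the containment bounds $\|x_i - c_l\| \le \lambda(n_l-1)$ and $\|x_j - c_o\| \le \lambda(n_o-1)$. Subtracting the two radii from the center gap leaves $\|x_i - x_j\| > 2\lambda$, so the largest pairwise distance in the data strictly exceeds $2\lambda$, which is exactly (\ref{lambdabound1}). This part is essentially a one-line consequence of the gap estimate (\ref{eq:gap}).

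For the lower bound (Part~2), I would assume $k<n$, which forces the largest cluster $V_l$ to have $q = n_l \ge 2$ points. The naive route—bounding one within-cluster distance by $2r_l = 2\lambda(q-1)$ via the triangle inequality—only yields the weaker factor $2(q-1)$. To recover $\sqrt{2q(q-1)}$ I instead pass to squared distances and use that $c_l$ is the cluster mean (Part~2 of Theorem~\ref{theoboundingball}), via the variance identity $\sum_{i<j,\, x_i,x_j\in V_l}\|x_i-x_j\|^2 = q\sum_{x_i\in V_l}\|x_i-c_l\|^2$. I bound the right-hand side above by $q\cdot q\,\lambda^2(q-1)^2$, applying the per-point radius bound $\|x_i - c_l\| \le r_l = \lambda(q-1)$ to each of the $q$ terms, and I bound the left-hand side below by $\binom{q}{2}\bigl(\min_{a\ne b}\|x_a-x_b\|\bigr)^2$. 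Cancelling the common factor $\tfrac{q(q-1)}{2}$ gives $\min_{a\ne b}\|x_a-x_b\|^2 \le 2q(q-1)\lambda^2$, which is (\ref{lambdabound2}).

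I expect the main obstacle to be Part~2, and specifically the step of trading linear (triangle-inequality) estimates for the quadratic variance identity; this is precisely what sharpens the constant from $2(q-1)$ to $\sqrt{2q(q-1)}$. The remaining care is bookkeeping: verifying that $k<n$ indeed forces $q\ge 2$ so the bound is non-vacuous, and that the radius bound is summed over all $q$ points rather than invoked for a single pair. Part~1 carries no real difficulty beyond invoking the separation and containment statements.
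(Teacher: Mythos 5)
Your proposal is correct and follows essentially the same route as the paper: the upper bound falls out of the inter-ball gap (\ref{eq:gap}) exactly as in the paper's argument, and the lower bound uses the same variance identity $\sum_{i<j}\|x_i-x_j\|^2 = q\sum_i\|x_i-c_l\|^2$ (valid because $c_l$ is the cluster mean), bounded above via the radius $r_l=\lambda(q-1)$ and below via $\binom{q}{2}\min_{a\neq b}\|x_a-x_b\|^2$. Your added care about $k<n$ forcing $q\geq 2$ matches the paper's implicit assumption, so nothing is missing.
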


The theorem, even though not being sufficient conditions, serves as a guideline for setting $\lambda$ to obtain nontrivial solutions. We learnt that $\lambda$ should scale linearly with the magnitude of the data, and of the same magnitude as  pairwise distances of the samples. This is more intuitive than previous guidelines in \citep{Panahi17}. While $q$ is unknown, we know that $q<n$ in nontrival solutions. 

\begin{proof}

\textbf{Upper bound.} As $1<k$, there exist two samples $x_o$ and $x_l$ that belong to two different clusters. Therefore, their distance must be equal or larger than the gap between clusters. Therefore, we have the upper bound of $\lambda$:
\begin{equation}
2\lambda < \|x_o - x_l\| \leq \max_{i\neq j}   \|x_i - x_j\|. 
\end{equation}

\textbf{Lower bound.}  As $k<n$, let $V_l$ be the largest  cluster with size $q$ ($q > 1$). 
\begin{align*}
\sum_{i,j| x_i,x_j \in V_l} \|x_i - x_j\|^2 &= \sum_{i,j| x_i,x_j \in V_l}  \|(x_i - c_l) - (x_j - c_l)\|^2 \\
&= (q-1) \sum_{i | x_i \in V_l} \|x_i - c_l\|^2 -2 \sum_{i\neq j| x_i,x_j \in V_l} \langle x_i - c_l, x_j - c_l \rangle \\
&= q \sum_{i | x_i \in V_l} \|x_i - c_l\|^2 - \|\sum_{i | x_i \in V_l} (x_i - c_l) \|^2\\
&= q \sum_{i | x_i \in V_l} \|x_i - c_l\|^2\\
&\leq q^2 r_l^2 = q^2 (q-1)^2\lambda^2.
\end{align*}
As $\frac{q(q-1)}{2} \min_{i \neq j} \|x_i - x_j\|^2 \leq  \sum_{i,j| x_i,x_j \in V_l} \|x_i - x_j\|^2 \leq q^2 (q-1)^2\lambda^2$, then, 
\begin{align}\label{eqlowerbound}
\min_{i \neq j} \|x_i - x_j\|^2 &\leq 2 q(q-1) \lambda^2. \nonumber \\
\min_{i \neq j} \frac{\|x_i - x_j\|}{\sqrt{2 q(q-1)}} &\leq  \lambda.
\end{align}
\end{proof}
For example, if the largest cluster is of size 2, then $\min_{i \neq j} \frac{\|x_i - x_j\|}{2} \leq  \lambda$. Larger clusters can appear at smaller $\lambda$. In principle, $\min_{i \neq j} \frac{\|x_i - x_j\|}{\sqrt{2 (n-1)(n-2)}} (\leq \min_{i \neq j} \frac{\|x_i - x_j\|}{\sqrt{2 q(q-1)}}) \leq  \lambda$ is an absolute lower bound for nontrivial solution. However,
if we only look for clustering solutions with not too large clusters ($q \ll n$), (\ref{eqlowerbound}) can be used as the lowerbound of $\lambda$.

\subsection{Impossible to Cluster} Previous subsection is about necessary conditions of $\lambda$ for nontrivial solutions. Do we have sufficient conditions for $\lambda$ to obtain nontrivial solutions? We show that the answer is no in general. 

We show an example of collinear samples that it is not possible to find nontrivial solutions using convex clustering.  Let $x_i = i \in \mathbb{R}$, or more general, $X$ is a dataset with samples lying in a straight line with  successive samples having distance $1$. If there is a nontrivial cluster, say $V_l$ with $|V_l| = n_l$, then $\lambda < 0.5$ for bounding ball gaps. For bounding ball radii condition, given that diameter of $V_l$ is at least $n_l-1$, which is not greater than the diameter of the bounding ball, $2\lambda(n_l - 1) \geq n_l-1$, or $\lambda \geq 0.5$. These conditions on $\lambda$ are contradictory to each other. In other words, there is no $\lambda$ for convex clustering to find nontrivial solutions in this example. This is similar to agglomerative clustering.

\subsection{Consistency of Bounding Balls} Whether the balls are consistent if we sample more points ($n \to \infty$) from the same distribution? In principle, samples will fills up the support (the region with nonzero density) of the distribution. Therefore, as $n \to \infty$, if nontrivial solutions exist and with suitable $\lambda$, the solutions will converge to the bounding balls of the continuous regions of the support of the distributions. If the support of the distribution is connected, then the convergence of the solutions will be only one clusters (with suitable $\lambda$). There is no guaranteed convergences for not suitable $\lambda$. 

\begin{figure}[t]
  \centering
  \includegraphics[width=1\linewidth]{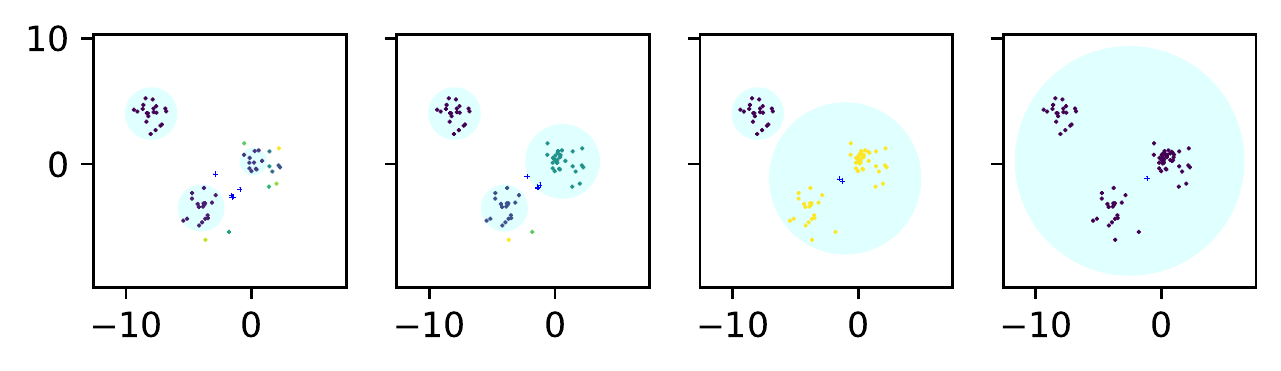}\\
  \includegraphics[width=1\linewidth]{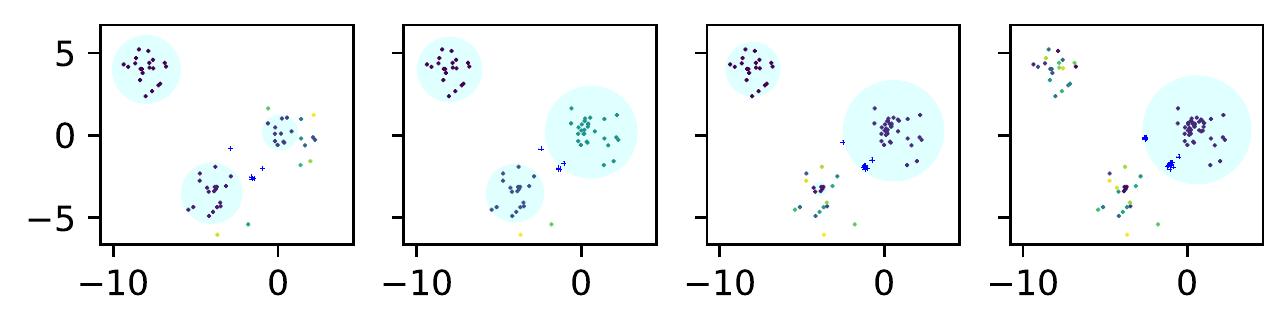}
  \caption{Solutions of convex clustering on three Gaussian components data with more samples added to one component. From left to right, there are 0, 8, 16, 24 more samples added to the interior of the right most component (all with $\lambda = 0.11$). Upper figure, with fixed $\lambda = 0.11$, the cluster corresponding to the increasing component became larger as more samples added, merging with other clusters. Lower figure: by reducing $\lambda$ ($\lambda = 0.11, 0.099, 0.088$ and $0.077$), the increasing component can be learnt as one cluster at the expense of the other components, which were split into smaller ones.}
  \label{boundingball}
\end{figure}

\section{Experiments}

We ran experiments and visualized the results to verify properties of convex clustering  (implemented with ADMM algorithm). Samples in the same cluster were plotted with the same color. For convex clustering, cluster prototypes are blue \textcolor{blue}{+}, and corresponding bounding balls has light cyan color.

\subsection{Inflexibility of convex clustering} The fact that the radii of the bounding balls are proportional to the cluster sizes can be a disadvantage of convex clustering. The case of data with well-separated clusters might not be learnt by convex clustering if the cluster sizes are not proportional to the areas of the clusters. We shows a demonstration of this case in Figure \ref{boundingball}.  We started with three Gaussian components (the left most plot) with 60 samples, then adding 8, 16 and 24 more samples to the interior of the right most component to make three more datasets (the other components remained fixed). We first ran convex clustering with fixed $\lambda = 0.11$ for all four datasets and plotted the solutions (upper figure). We found that the dataset can be clustered by convex clustering. However, adding more data to the interior of the cluster (in the subfigures on the right), the cluster corresponding to the increasing components become larger and larger (due to the increase in cluster size) to the extend that it merges with the other components. To avoid this phenomenon, one can reduce $\lambda$ to keep this component a cluster. In the  lower figure, we ran convex clustering with $\lambda = 0.11, 0.099, 0.088$ and $0.077$. However, this will break the other clusters due to smaller and smaller $\lambda$. This shows the inflexibility of convex clustering that not only the clusters must be well separated, their cardinalities must match with their areas.  

\begin{figure}
  \centering
  \includegraphics[width=0.85\linewidth]{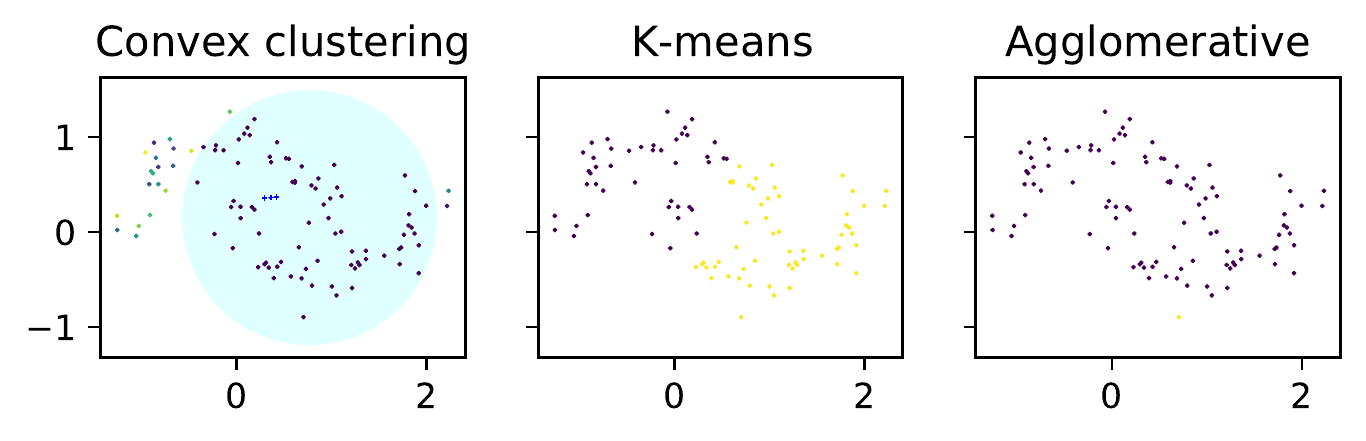}\\
    \includegraphics[width=0.85\linewidth]{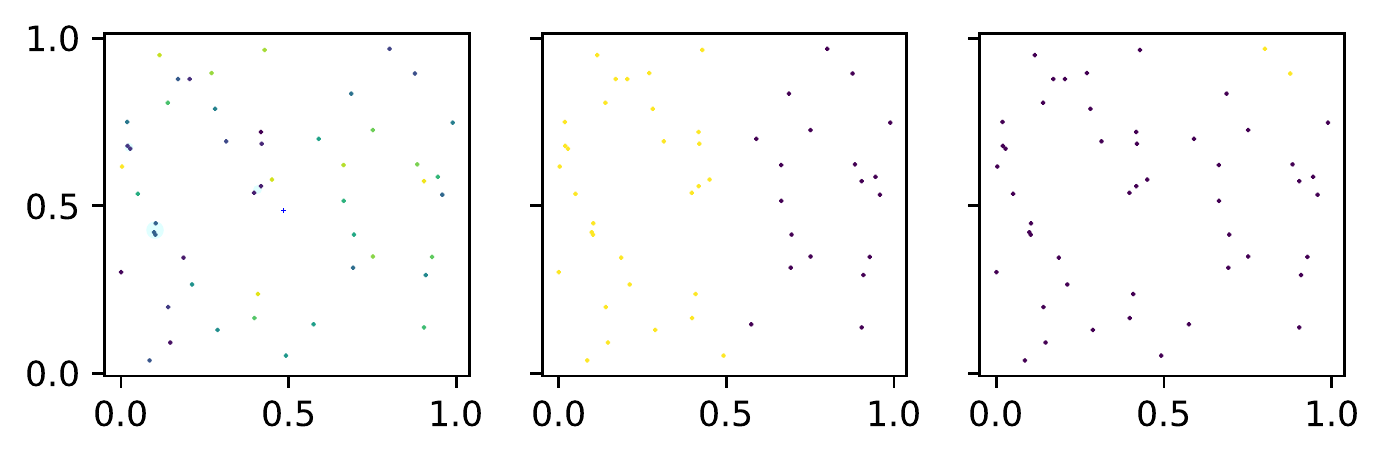}\\
        \includegraphics[width=0.85\linewidth]{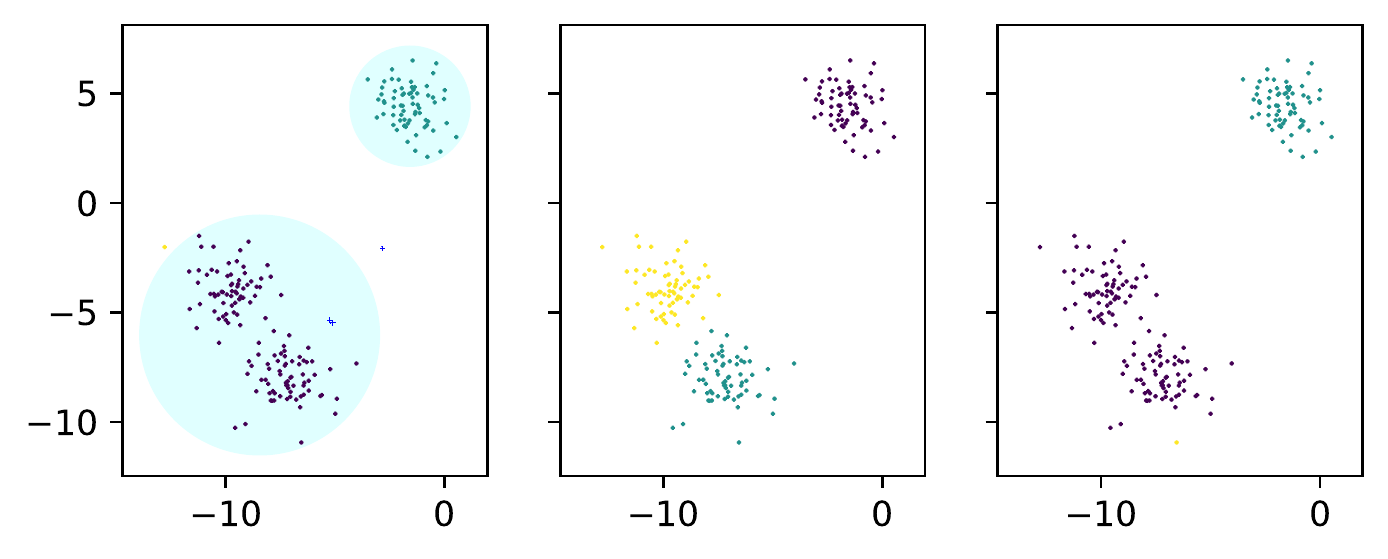}\\
    \includegraphics[width=0.85\linewidth]{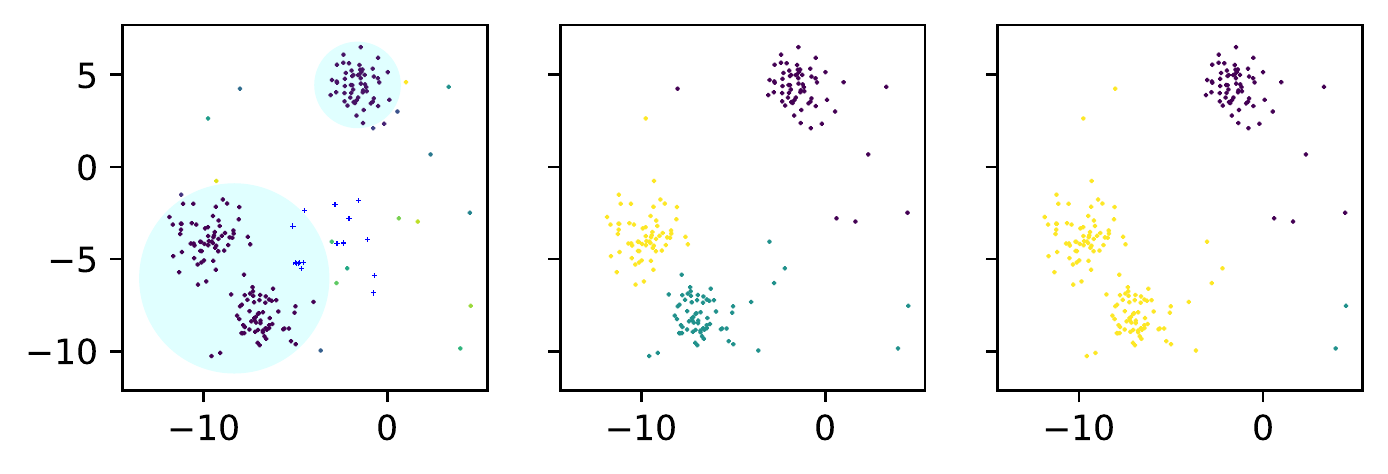}
  \caption{Clustering results of three algorithms (convex clustering, k-means clustering and agglomerative clustering) on two-moons, uniform, three-blobs  and three-blobs with noise datasets. For convex clustering, bounding balls have the shaded light cyan color. Blue dots are prototypes. Hyperparameters of the four datasets: $\lambda=0.0172,\ 0.01307,\ 0.042,\ 0.042$, selected as the cases with nontrivial large clusters.}
  \label{figmainexp}
\end{figure}%

\subsection{Difference from k-means and agglomerative clusterings}
The second experiment was to visualize some typical cases to show the difference of convex clustering from k-means and agglomerative clustering (ward linkage) with synthetic data  in Figure \ref{figmainexp}. The datasets were (row by row) \textbf{1)} two-moons  ($n=100$, noise level: $0.15$),  \textbf{2)} uniform data ($n=50$), \textbf{3)} three Gaussian components ($n=200$),  and \textbf{4)} three Gaussian components with 10$\%$ random noises.  We observed that convex clustering could find large clusters  sometimes, but did not necessarily conform to the true data distributions (no moon shapes or two-components clusters). The last dataset showed that convex clustering could ignore noisy samples far away from dense areas, suggesting  noise removal ability.

\subsection{Sensitivity to $\lambda$}
We show the solutions of convex clustering on the datasets on the paper with different $\lambda$ in Figure \ref{figsupp}. We found that in difficult cases, the solutions changed drastically with minor changes in the hyperparameter. This means that it is not easy to choose hyperparameter $\lambda$ for some desirable solutions as they are sensitive to $\lambda$. 
\begin{figure}
  \centering
     \includegraphics[width=0.85\linewidth]{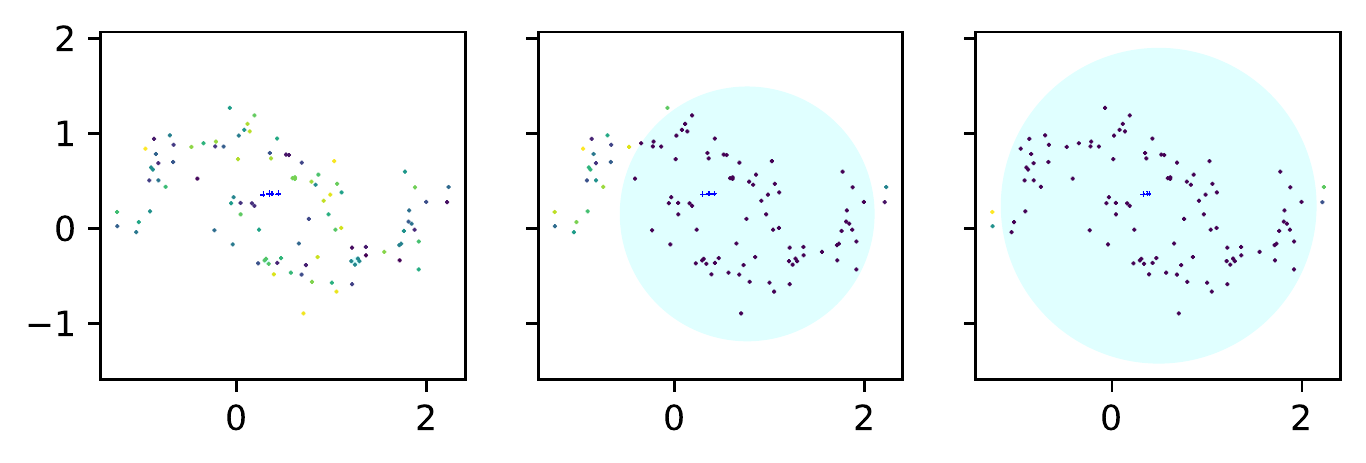}\\
     \includegraphics[width=0.85\linewidth]{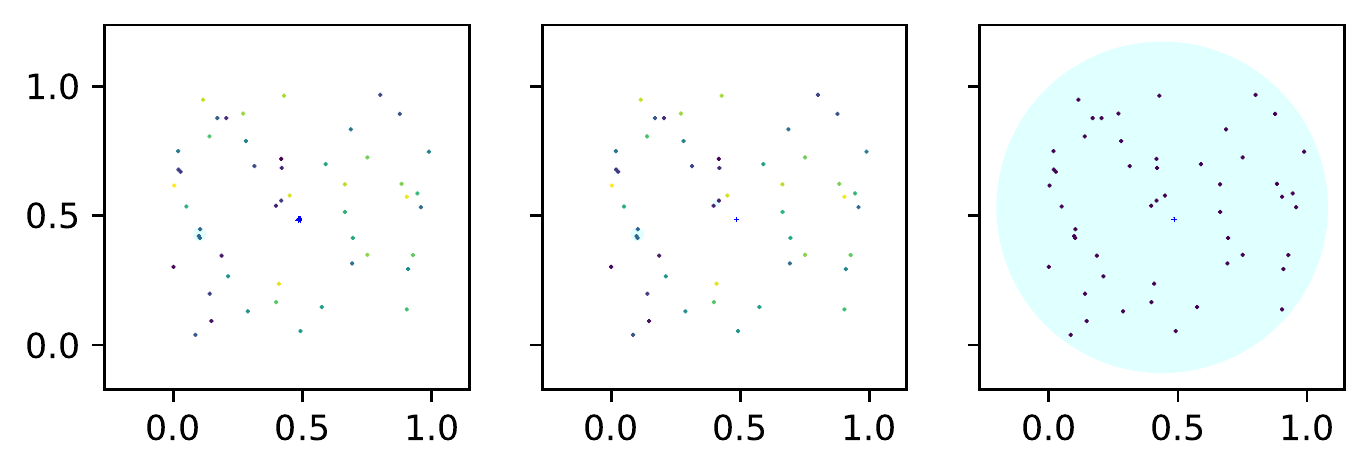}\\
     \includegraphics[width=0.85\linewidth]{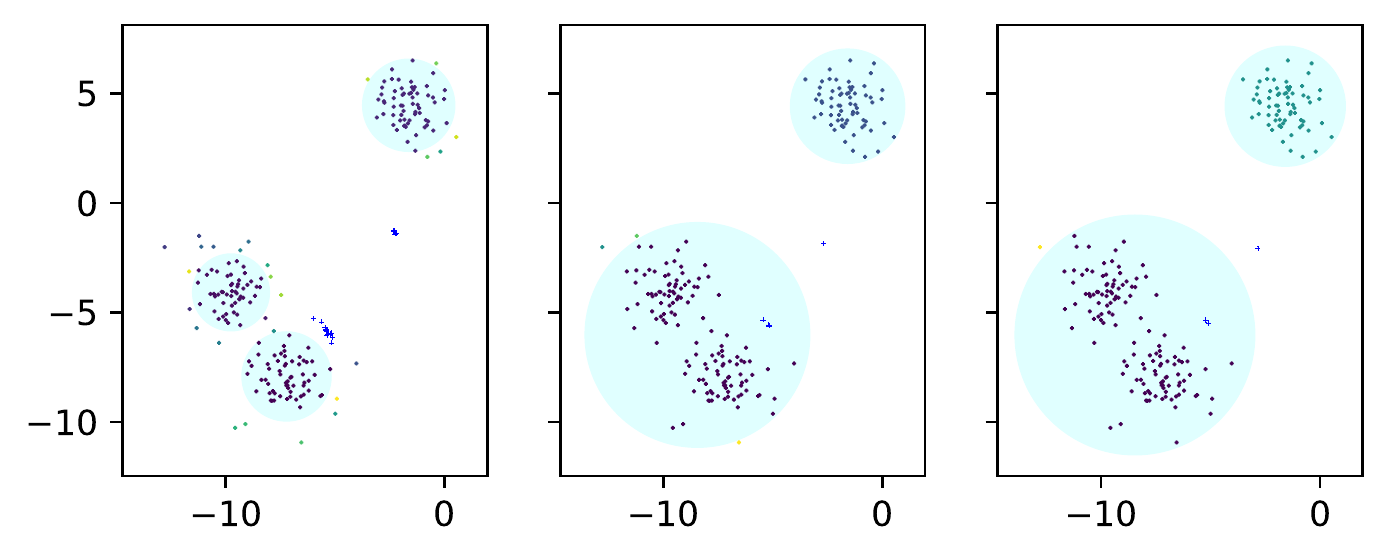}\\
     \includegraphics[width=0.85\linewidth]{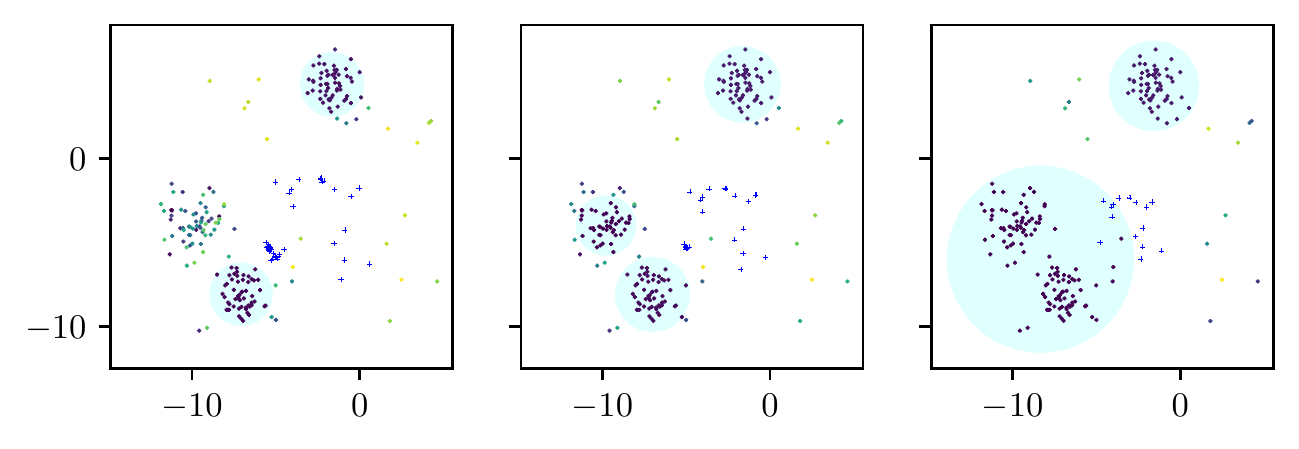}
  \caption{We generate some typical datasets (from upper to lower)  two-moons, uniform, three components, and  three components with noises datasets. Visualization of convex clustering solutions with different $\lambda$: two moons, $\lambda = .017,\ .0172,\ .0175$, uniform $\lambda = .013,\ .01307, \ .0131$, and the remaining two datasets: $\lambda = .035,\ .04,\ .045$.}
  \label{figsupp}
\end{figure}%

\section{Conclusions}
We have studied the solutions of (unweighted) convex clustering to clarify the myth of its relationship with k-means clustering and agglomerative clustering and found the following three facts. 
\begin{itemize}
\item Convex clustering produces convex clusters, like k-means clustering but unlike agglomerative clustering.
\item Unlike k-means clustering that produces clusters as Voronoi cells, clusters of convex clustering are bounded by disjoint bounding balls. Importantly, the bounding balls have significant gaps between them. We'd state that convex clustering only learns \emph{circular clusters}. 
\item Unlike k-means clustering that cluster boundary has \emph{the same distance} to closest cluster centers, the radius of a bounding ball, distance from its center to its boundary, is \emph{proportional to the size of the cluster}.  
\end{itemize}
We have also characterized  all possible datasets that produce the same solutions. We have shown an intuitive guideline for $\lambda$ to produce nontrivial clusters. We have also shown an impossible case and condition for consistency of bounding balls. We have further shown, through demonstration,  behaviors of convex clustering, and found its noise-removal ability. 

Interesting future work includes:  1) setting weights for pairwise fusion penalties. This seems to be the only way to obtain nonconvex clusters. It is expected that the weight function determines the shapes of clusters, \emph{how nonconvex} they can be. 2) Closing the gap between our insights (necessary conditions) and clustering consistency (sufficient conditions) to completely understand the behavior of convex clustering.

\vskip 0.2in

\bibliography{lrclust}

\end{document}